\pgfplotsset{compat=1.17}
\def\BibTeX{{\rm B\kern-.05em{\sc i\kern-.025em b}\kern-.08em
    T\kern-.1667em\lower.7ex\hbox{E}\kern-.125emX}}
\begin{document}

\newtheorem{definition}{Definition}
\newtheorem{theorem}{Theorem}
\newtheorem{lemma}[theorem]{Lemma}
\renewcommand{\qedsymbol}{} %togliere commento per levare quadratino alla fine delle proof

\title{
Improving Graph Embeddings in Machine Learning Using Knowledge Completion with Validation in a Case Study on COVID-19 Spread
}

\begin{comment}

\author{
    \IEEEauthorblockN{Anonymous Author(s)}
    \IEEEauthorblockA{\textit{Anonymous Institution(s)} \\
    Anonymous Location \\
    email@domain.com}
}
\end{comment}
\author{
\IEEEauthorblockN{Rosario Napoli}
\IEEEauthorblockA{\textit{University of Messina}\\
Messina, Italy \\
rnapoli@unime.it \\
0009-0006-2760-9889 
}
\and
\IEEEauthorblockN{Gabriele Morabito}
\IEEEauthorblockA{\textit{University of Messina}\\
Messina, Italy \\
gamorabito@unime.it\\
0009-0006-2144-8746} % ORCID ID
\and
\IEEEauthorblockN{Antonio Celesti}
\IEEEauthorblockA{\textit{University of Messina}\\
Messina, Italy \\
acelesti@unime.it\\
0000-0001-9003-6194} % ORCID ID
\and
\IEEEauthorblockN{Massimo Villari}
\IEEEauthorblockA{\textit{University of Messina}\\
Messina, Italy \\
mvillari@unime.it\\
0000-0001-9457-0677} % ORCID ID
\and
\IEEEauthorblockN{Maria Fazio}
\IEEEauthorblockA{\textit{University of Messina}\\
Messina, Italy \\
mfazio@unime.it\\
0000-0003-3574-1848} % ORCID ID
}

\maketitle

\begin{abstract}

The rise of graph-structured data has driven major advances in Graph Machine Learning (GML), where graph embeddings (GEs) map features from Knowledge Graphs (KGs) into vector spaces, enabling tasks like node classification and link prediction. However, since GEs are derived from explicit topology and features, they may miss crucial implicit knowledge hidden in seemingly sparse datasets, affecting graph structure and their representation. We propose a GML pipeline that integrates a Knowledge Completion (KC) phase to uncover latent dataset semantics before embedding generation. Focusing on transitive relations, we model hidden connections with decay-based inference functions, reshaping graph topology, with consequences on embedding dynamics and aggregation processes in GraphSAGE and Node2Vec.
Experiments show that our GML pipeline significantly alters the embedding space geometry, demonstrating that its introduction is not just a simple enrichment but a transformative step that redefines graph representation quality.
\end{abstract}

\begin{IEEEkeywords}
Knowledge Engineering, Knowledge Graphs, Graph Embeddings, Knowledge Completion, Graph Machine Learning
\end{IEEEkeywords}

\section{Introduction}
In recent years, Graph Machine Learning (GML) has emerged as a rapidly evolving paradigm to exploit the relational nature of data, enabling new discoveries in several areas, such as social network analysis, recommendation systems and fraud detection \cite{b92}. By modeling data as graphs and focusing on the relationships between entities, GML offers capabilities that go beyond those of conventional Machine Learning (ML) approaches.
In fact, traditional ML models are typically designed to operate on tabular data structures, where each sample is treated as isolated from the others\cite{b72}.
As a result, they are not suitable for scenarios in which the connections between data carry important contextual meaning \cite{b6}. %. These models tend to ignore interdependencies in the data
In particular, they overlook the importance of relationships between elements, which are essential to uncover meaningful structural patterns.
To address these limitations, GML methods incorporate graph-based representations, where entities (nodes) and their interactions (edges) are modeled explicitly.
At the same time, the growing availability of interconnected data has led to the widespread adoption of NoSQL Graph Databases (GDBs), which naturally support such representations. In particular, GDBs represent data as Knowledge Graphs (KGs), an evolution of graph data structure in which elements can have different labels and properties\cite{b12}. This natural representation describes real-world dynamics, enabling GML models to capture structural and semantic properties through node classification and link prediction\cite{b53}.

Central to GML are Graph Embedding (GE) techniques, which convert graph's structural and semantic information in a low-dimensional space that is given as input to ML models. These approaches have become increasingly popular due to their ability to handle relational data, which usually requires significant computational resources and task-dependent feature engineering \cite{b74}. %By automating the feature extraction process, graph embeddings provide a powerful tool for constructing vectorized spaces that preserve the inner properties of graphs and their components, enabling the application of traditional ML models providing feature representations suitable for downstream tasks\cite{b73}, which would otherwise be difficult to process due to their non-Euclidean nature\cite{b74}.
%Commonly used GE methods include: Node2Vec, DeepWalk and GraphSAGE, each employing different strategies for information propagation and aggregation within the graph structure. 
Since GEs capture both local and global structure to provide features' vector representations\cite{b75}, a critical limitation of existing GDB-GML architectures is their dependence on explicit relationships in the dataset, without considering implicit or hidden connections. In particular, since KGs consist of nodes and edges with different semantics, their high heterogeneity causes information to remain unexpressed. As a result, a dataset that may initially appear fragmented, incomplete, or lacking in content, could actually contain a substantial amount of implicit knowledge that is not captured in the embedding space. This leads to representations that miss important structural and semantic patterns, overlooking potentially valuable hidden insights. %This means that such implicit relationships, if correctly identified and incorporated into the dataset, could profoundly influence graph topology and subsequently the quality of derived embeddings.
Our key intuition is that introducing a dedicated Knowledge Completion (KC) step before the embedding phase can significantly improve the quality of data. %With our new workflow,

In this study, we present a new GML pipeline where incomplete KGs are improved using a KC step that fills in missing links and node properties by using decay-based inference functions. %This pre-embedding enhancement leads to a structurally more complete and semantically consistent graph and updated nodes features, which are then used to generate embeddings.
Then, by working on a more informative representation of the data, where neighborhood structures and interaction pathways are completely changed, the embedding process can better capture meaningful patterns and interactions, ultimately redefining the quality of the learned representations. % and boosting performance in downstream tasks. 
Moreover, we apply our proposed methodology to evaluate its impact on a real-world dataset. %The goal is to validate the effectiveness of enriching the graph structure before learning node representations.
In particular, we assess how embeddings generated from the KC-enhanced pipeline differ in embedding space position when compared to those derived from the original, incomplete graph. This work demonstrates how explicitly restoring previously implicit relationships can lead to more accurate and semantically rich embeddings, confirming the value of incorporating a KC phase into the standard GML pipeline.

The rest of this paper is organized as follows.
Section II provides an overview of related work on GEs and the integration of KG. Section III then introduces the key definitions that form the basis of our approach. In Section IV, we describe the traditional GDB-GML pipelines, while in Section V, we present our approach, which includes an explicit KC phase. Section VI then details the experimental evaluation carried out on a real-world COVID-19 contact network. Finally, in Section VII we discuss conclusions and outline directions for future research.

\section{Related Works}
GEs have found applications in various domains, including: social network analysis, recommendation systems, bioinformatics and cybersecurity\cite{b79}.
Their ability to capture complex relationships and patterns in graph data makes them a valuable tool for solving a wide range of ML problems\cite{b80}, like in social network analysis, where they can be used to identify communities, predict links and classify users\cite{b81}.
However, the field is extremely new, and understanding how to enhance GEs for downstream ML tasks remains a crucial challenge. In particular, ongoing research in the field of GML has explored several strategies to improve both the quality and the expressiveness of GEs. These strategies range from the integration of additional semantic information into the graph structure to the design of more advanced neural architectures that can capture richer patterns.

Some approaches aim to enrich input graphs by incorporating knowledge from external sources, such as in KGs, where contextual and semantic attributes are added to both nodes and edges\cite{b84}. Other research efforts focus on developing sophisticated embedding models that are capable of encoding more complex structural and relational information, such as: transformer-based architectures, which allow for global attention mechanisms\cite{b85}, uncertainty-aware embeddings, that model confidence levels in the representation\cite{b86} and belief-propagation techniques\cite{b87}, which exploit probabilistic messages passing across the graph structure. Furthermore, recent models, such as BiGI\cite{b88} and SeHGNN\cite{b89},
propose novel training processes and hybrid architectures that emphasize the importance of global information or preaggregated multi-hop neighborhood features. These techniques reflect a growing trend toward capturing not only local connectivity patterns but also high-level and graph-wide dependencies, thus enabling more informative and generalizable embeddings.

Although numerous methodologies have been proposed to enhance the quality and effectiveness of GEs (with few related to KGs), none of these approaches explicitly focus on improving the graph structure prior to the embedding phase with the specific goal of changing their expressiveness. Moreover, there is a lack of methods that are agnostic to both the embedding phase and the hyperparameter tuning process.
This highlights the need for a KC stage that can effectively capture real dataset behavior and semantics before the embedding step. Such a completion step could not only improve the overall quality of the embeddings but also enhance ML downstream, as a more connected graph structure may allow for faster and more efficient information propagation across fewer steps.

To the best of our knowledge, this specific separation of concerns and the pre-processing strategy we propose have not been explored in prior literature, marking a significant innovation in the field.

\section{Definitions}
This section introduces the formal definitions of core concepts used throughout the work.
%Several prominent NoSQL graph databases exist, including: Neo4j, FlockDB, AllegroGraph, JanusGraph, and DGraph\cite{b37}, each with its unique features and capabilities. 
%The ability to efficiently store and query complex relationships directly within the data model is crucial for Big Data applications, where interconnectedness is a must. This inherent ability to model relationships directly translates to faster query times\cite{b2}, especially for complex queries that involve numerous relationships, compared to RDBMS that require multiple joins.

%In this context, Neo4j is the most widely used DBMS in GDB-GML applications, employing a KG-based data representation model and libraries, such as Graph Data Science (GDS), that enable the implementation of GML algorithms. 

\begin{definition}[Knowledge Graph]
A Knowledge Graph is defined as a tuple \( KG = (V, R, E, \ell_V) \), where:
\begin{itemize}
    \item \( V \) is the set of entities (nodes);
    \item \( R \) is the set of relationship types;
    \item \( E \subseteq V \times R \times V \) is the set of labeled, directed edges (triplets);
    \item \( \ell_V : V \rightarrow 2^C \) is a labeling function that assigns one or more class labels from the set \( C \) to each node.
\end{itemize}

\end{definition}

\begin{definition}[KG atomic unit]
The atomic unit of a KG is a triple\cite{b32}, that consists of:
\begin{itemize}
    \item  labelled head node \( h \);
    \item a labelled tail node \( t \);
    \item a directed labelled edge \( r\) from \( h \) to \( t \).
\end{itemize}
\end{definition}
%Graph Embedding (GE) refers to a set of techniques that encode the structural information of a graph, including topological features and node attributes, into a low-dimensional vector space.
\begin{definition}[Graph Embedding]
Let \( KG = (V, E, R,  \ell_V) \) be a Knowledge Graph. A Graph Embedding (GE) is a function
\[
f: V \rightarrow \mathbb{R}^d,
\]
where \( d \ll |V| \), which maps each node \( v \in V \) to a vector \( f(v) \in \mathbb{R}^d \).
\end{definition}

\begin{definition}[Knowledge Completion]
Let \( KG = (V, E, R,  \ell_V) \) be a Knowledge Graph, Knowledge Completion is the task of predicting missing facts in the form of new atomic units \((h, t, r)\), where \(h, t \in V\) and \(r \in R\), which correspond to either add a new edge \((h, t)\) to \(E\), or to enrich an existing labelled edge by associating new properties or a new appropriate relation \(r \in R\).
\end{definition}

%enabling ML models to use graph-based information.
%This process serves as an intermediate representation that enables Graph Machine Learning (GML), a class of models capable of performing tasks such as link prediction and node classification\cite{b79}.

%\section{Improving Graph Embeddings Techniques with Knowledge Completion}
\section{Traditional GDB-GML Pipelines}
Traditional GDB-GML pipelines (Fig.\ref{fig:current_gml_pipeline}) typically follow a linear flow in which KGs are constructed by integrating data from various sources, and then directly used for downstream GML tasks. This process generally is carried out in two main steps:
\begin{enumerate}
    \item \textbf{Knowledge Fusion (KF)}. The process of integrating and unifying knowledge from multiple sources to create a coherent KG \cite{b33}.
    \item \textbf{Knowledge Reasoning (KR)}. The process of creating GEs and giving them as input for ML models.
\end{enumerate}
\begin{figure}[ht!]
    \centering
    \includegraphics[width=1\linewidth]{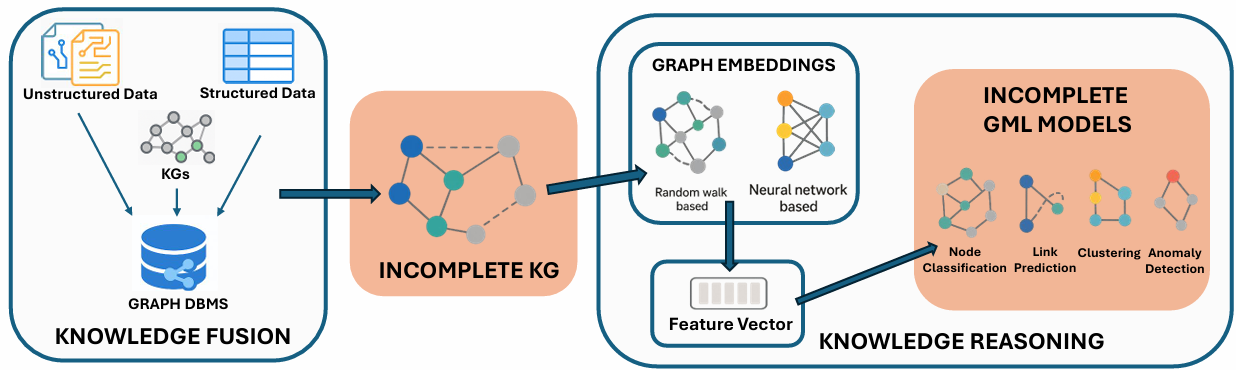}
    \caption{Current GML pipeline.}
    \label{fig:current_gml_pipeline}
\end{figure}
% More in details, d
During the KR phase, GEs can be generated adopting several algorithms, which can be classified into two main categories \cite{b77}:
\begin{itemize}
\item  \textbf{Random walk based algorithms}. These algorithms rely on simulating random walks over the graph to capture the structural and contextual relationships between nodes. The core idea is to explore the graph through random sequences of nodes and then learn low-dimensional vector representations that preserve node co-occurrence patterns. A well-known example in this category is \textbf{Node2Vec}, which introduces a biased random walk strategy controlled by return and in-out parameters based on Depth First Search (DFS) and Breadth First Search (BFS), allowing it to efficiently capture both homophily (nodes with similar features) and structural equivalence (nodes with similar roles in the graph).
\item 
    \textbf{Neural network based algorithms}. These algorithms use deep learning architectures to aggregate and transform features from a node's local neighborhood to learn its embedding. They often rely on message-passing mechanisms, where each node iteratively updates its representation based on its neighbors' features. A representative model in this class is \textbf{GraphSAGE} (Graph Sample and Aggregation), which learns a function that generates embeddings by sampling and aggregating features from a node’s local neighborhood using techniques such as: mean, LSTM-based or pooling aggregators.
\end{itemize}

\section{KC-Enhanced Pipeline}
In many existing approaches, when KC is applied, it is included within the embedding process itself or it is implicitly integrated during model training. However, this often results in suboptimal representations, due to the incompleteness of the input KG.
In contrast, our approach (Fig. \ref{fig:improved_gml_pipeline}) restructures the traditional pipeline by explicitly introducing KC as a dedicated phase, specifically after KF and before KR.
By doing so, we aim to integrate all missing implicit edges related to a specific domain of knowledge, completing the graph's structure and then enhancing the semantic and structural quality of the KG before any reasoning or embedding.

\begin{figure}[ht!]
    \centering
    \includegraphics[width=1\linewidth]{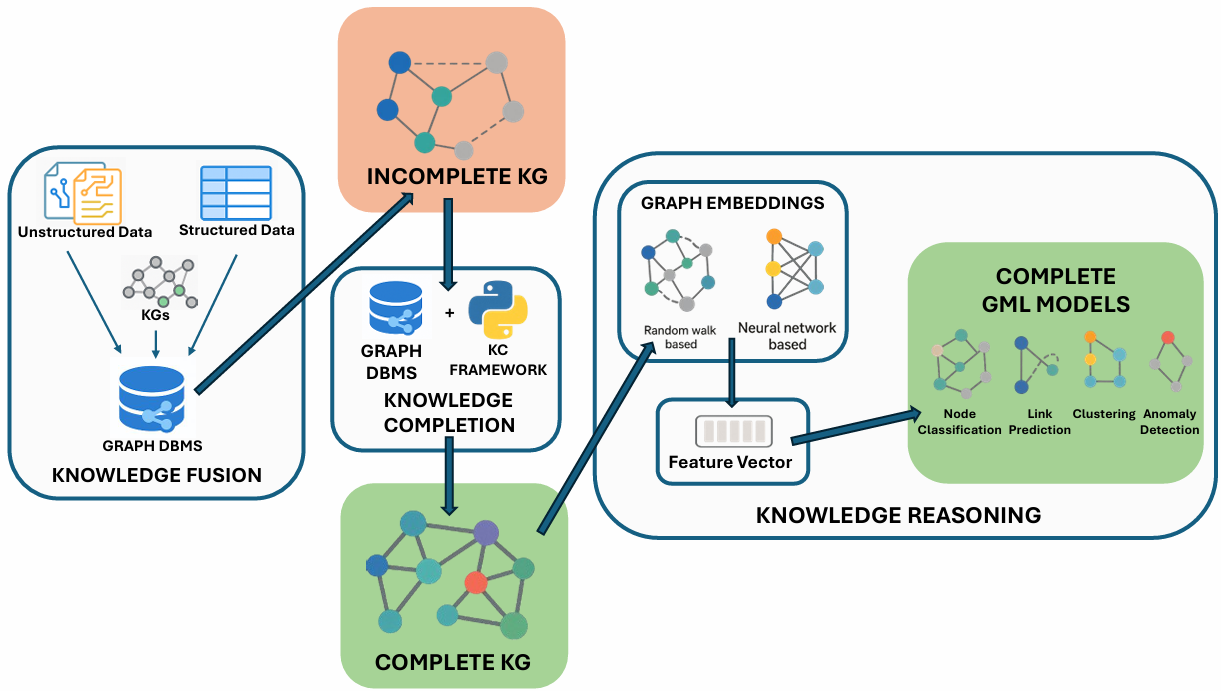}
    \caption{Our new GML pipeline.}
    \label{fig:improved_gml_pipeline}
\end{figure}
\subsection{KC through Scalable Transitive Relationship Inference}
At the heart of our KC phase there is the inference of scalable transitive relationships. These edges allow us to infer missing connections by using transitive patterns already present in the data. Unlike traditional KC techniques, that rely on statistical inference, our approach is based on a form of transitivity modeled by shared strength between connected nodes. Specifically, we introduce scalable transitive relationships as a mechanism to transform the graph structure, modifying both local and global connectivity patterns according to the intensity of the underlying connections. % as they dynamically reshape graph topology and modify the context in which embeddings are generated.
In the following, we give the formal definitions of these concepts.

\begin{definition}[Transitive Relationship]
In mathematics, a relation $R$ on a set $A$ is called transitive if, for any elements $a$, $b$ and $c$ in \( A \), the following holds:
\[
a R b \text{ and } b R c \Rightarrow a R c
\]
In other words, if \( a \) is related to \( b \) and \( b \) is related to \( c \), then \( a \) must also be related to \( c \).
\end{definition}

\begin{definition}[Transitive relationship in Knowledge Graphs]
    A relationship \( r \in R \) in a Knowledge Graph \( KG = (V, E, R,  \ell_V) \) is \textit{transitive} if for any \( x, y, z \in V \):
\[
(x \xrightarrow{r} y) \wedge (y \xrightarrow{r} z) \Rightarrow (x \xrightarrow{r} z)
\]
\end{definition}
\begin{definition}[Path]
Given a relationship \( r \in R \), a \textit{path} \( p \) from node \( x \) to node \( z \) is defined as a finite sequence of edges:
\[
p = (x_0 \xrightarrow{r} x_1 \xrightarrow{r} x_2 \xrightarrow{r} \dots \xrightarrow{r} x_n)
\]
such that \( x_0 = x \), \( x_n = z \), and each edge \( (x_i \xrightarrow{r} x_{i+1}) \in E \) for \( 0 \leq i < n \). The number of hops in \( p \), denoted \( h(p) \), is equal to \( n \).
\end{definition}
\begin{definition}[Strength of a path]
The \textit{strength} of a specific path \( p \) from node \( x \) to node \( z \), with respect to a transitive relationship \( r \), is defined as:
\[
S_p(x, z, r) = f(h(p))
\]
where \( h(p) \) is the number of hops in the path, and \( f: \mathbb{N} \rightarrow [0, 1] \) is a function that models the attenuation of influence with increasing path length (e.g., due to uncertainty, information decay, or relevance).
\begin{comment}
        The strength of a specific path \( p \) is:
\[
S_p(x, z, r) = f(h(p))
\]
with \( h(p) \) being the number of hops in \( p \), and \( f(h) \) a decreasing function modeling attenuation with distance or time.
\end{comment}
\end{definition}
\begin{definition}[Scalable transitive relationship]
    A transitive relationship \( r \in R \) is \textit{scalable} if there exists a function \( S(x, z, r): \mathbb{N} \rightarrow [0, 1] \) assigning a strength to the transitive relationship between nodes \( x \) and \( z \), based on all paths \( p \in P(x, z, r) \), where \( P(x, z, r) \) is the set of paths from \( x \) to \( z \) composed only of relations \( r \). 
     The overall strength is:
\begin{equation}
\label{eq:aggregated_score}
S(x, z, r) = \mathcal{A}(\{ S_p(x, z, r) \mid p \in P(x, z, r) \})
\end{equation}
where \( \mathcal{A} \) is an aggregation function (e.g., max, avg, sum). The relation propagates if:
\[
S(x, z, r) > \tau
\]
with \( \tau \) being a context-dependent threshold.
\end{definition}

%The idea is to fundamentally reshape a dataset by introducing this concept and applying it to the various techniques developed for learning graph embeddings. These techniques can be broadly classified into traditional embedding methods, random walk-based (like Node2Vec)  and neural network-based methods, like GraphSAGE\cite{b77}.
In the following Subsections we model the impact of KC respectively on Node2Vec and GrpahSAGE algorithms for generating GEs. 
\subsection{Impact of KC on Node2Vec Embeddings}
In this subsection, we formally describe the Node2Vec algorithm and examine the impact of KC, with a focus on how inferred transitive relationships alter node traversal probabilities.
\begin{definition}[Transition probability]
Given a Knowledge Graph \( KG = (V, R, E, \ell_V) \), let \( c = (v_1, v_2, \ldots, v_k) \) be a path. The transition probability of moving from node \( v = c_{i-1} \) to a neighbor \( x = c_i \) is defined as:
\begin{equation}
P(c_i = x \mid c_{i-1} = v) = \pi_{xv} = \frac{\alpha_{pq}(t,x) \cdot w_{vx}}{Z},
\end{equation}
\noindent
where \(w_{vx}\) is the cumulative weight of the edges between nodes \(v\) and \(x\), \(Z\) is a normalization constant, and \(\alpha_{pq}(t,x)\) is a search bias defined as:
\[
\alpha_{pq}(t,x) =
\begin{cases}
\frac{1}{p} & \text{if } d_{tx} = 0 \ , \\
1           & \text{if } d_{tx} = 1 \,, \\
\frac{1}{q} & \text{if } d_{tx} = 2 \,
\end{cases}
\]
\noindent
with \(d_{tx}\) being the shortest path distance between node \(t\) (the node visited before \(v\)) and node \(x\), \(p\) being the return parameter and \(q\) being the in-out parameter.
\end{definition}

\begin{definition}[Node2Vec on Knowledge Graphs]
Given a Knowledge Graph \( KG = (V, R, E, \ell_V) \), Node2Vec \cite{b78} is a method that learns low-dimensional vector embeddings \( \phi : V \rightarrow \mathbb{R}^d \) for nodes by simulating biased random walks over the graph structure.
A sequence of nodes is generated via a second-order random walk defined by the transition probability:
\begin{equation}
 P(c_{i} = x \mid c_{i-1} = v) =
\begin{cases}
\frac{1}{p} & \text{if } x = c_{i-2} \\
1           & \text{if } d(x, c_{i-2}) = 1 \\
\frac{1}{q} & \text{otherwise}
\end{cases}   
\end{equation}
where:
\begin{itemize}
    \item \( c_i \) denotes the node $c$ at position \( i \) in the walk;
    \item \( p \) is the return parameter controlling the likelihood of revisiting the previous node;
    \item \( q \) is the in-out parameter controlling the likelihood of exploring further away nodes;
    \item \( d(x, c_{i-2}) \) is the shortest path distance between node \( x \) and the node before the current one.
\end{itemize}
%The resulting node sequences are treated as "sentences" and passed to the Skip-Gram model to learn embeddings that preserve neighborhood similarity\cite{b78}.}
\end{definition}

\begin{definition}[Influence of KC in Node2Vec]
\begin{comment}
    The probability of the node to be traversed during the execution of Node2Vec, can be approximated as:
\[
\Pr(\text{visit node } u) \approx \frac{\text{\# of times } u \text{ is visited across all walks}}{\text{total \# of steps taken}}
\]
\end{comment}
Let \( KG = (V, E, R, \ell_V) \) be a Knowledge Graph, and let \( u \in V \) be a node. Let \( \mathcal{W} \) be a Node2Vec sampling process which generates a collection of walks: 
\begin{equation}
\mathcal{P} = \{ P_1, P_2, \dots, P_k \},
\end{equation}
where each walk \( P_i = (v_{i,1}, v_{i,2}, \dots, v_{i,T}) \) is a sequence of \( T \) steps.
The probability that node \( u \) is visited during these walks is given by:
\begin{equation}
    \Pr_{\mathcal{W}}(u) = \frac{\sum_{i=1}^{k} \sum_{t=1}^{T} \mathbf{1}_{\{v_{i,t} = u\}}}{k \cdot T},
\end{equation}
where $k$ is the number of random walks and \( \mathbf{1}_{\{v_{i,t} = u\}} \) is the indicator function that is 1 if \( v_{i,t} = u \) and 0 otherwise.
\end{definition}

\begin{lemma}\label{lemma:node2vec}
Let \( KG = (V, E, R, \ell_V)) \) be a Knowledge Graph and \( KG' = (V, E \cup E_{KC}, R, \ell_V ) \) the Knowledge Graph resulting from a KC step that infers transitive relationships. Let \(u \in V\) be a node, let \( \mathcal{W} \) be a Node2Vec algorithm executed on \( KG \), and let \( \mathcal{W}' \) be the same Node2Vec algorithm executed on  \( KG' \). 
Then, \[ \Pr_{\mathcal{W}'}(u) \geq \Pr_{\mathcal{W}}(u) .\]  
\end{lemma}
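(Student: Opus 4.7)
My plan is to exploit the fact that KC is a purely additive operation on the edge set. Since $E \subseteq E \cup E_{KC}$, the neighborhood of every node in $KG'$ contains its neighborhood in $KG$, and every length-$T$ walk admissible in $KG$ remains admissible in $KG'$, while additional walks made possible by the transitive edges in $E_{KC}$ become available. I would therefore frame the argument as a monotonicity comparison between the two random walk processes induced on the two nested graphs.

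Next, I would unfold the definition of $\Pr_{\mathcal{W}}(u)$ as an expectation over the random walks, expressing it as a sum over admissible length-$T$ sequences weighted by their sampling probabilities (determined by the normalized transition weights $\pi_{xv}$ and the Node2Vec bias factors $\alpha_{pq}$), and write $\Pr_{\mathcal{W}'}(u)$ in the same form but over the enlarged edge set. I would then decompose the walks sampled by $\mathcal{W}'$ into (i) those using only edges of $E$ and (ii) those using at least one edge of $E_{KC}$. The contribution of type (ii) walks to visits of $u$ is non-negative and strictly enlarges the set of paths through which $u$ can be reached.

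The main obstacle is the dilution effect on type (i) walks: when an intermediate node $v$ gains new neighbors through $E_{KC}$, the normalization constant $Z$ in its transition probability grows, so the probability of continuing along an original edge toward $u$ decreases. To neutralize this, I would construct a coupling between $\mathcal{W}$ and $\mathcal{W}'$ that uses the transitive nature of $E_{KC}$: whenever a walk in $KG$ would reach $u$ via a multi-hop path $v \to \cdots \to u$, the coupled walk in $KG'$ may traverse the corresponding transitive shortcut $v \xrightarrow{r} u \in E_{KC}$, which is guaranteed to exist by the definition of transitivity and the aggregation rule $S(x,z,r) > \tau$ used during KC. Hence each original visit to $u$ is matched by at least one visit in $KG'$, and the extra type (ii) walks only add further visits. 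Summing the two contributions yields $\Pr_{\mathcal{W}'}(u) \geq \Pr_{\mathcal{W}}(u)$, establishing the lemma.
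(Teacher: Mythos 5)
Your proposal correctly identifies the central difficulty that the paper's own one-line proof glosses over: when an intermediate node gains new neighbors from $E_{KC}$, its normalization constant $Z$ grows, so the per-step probability of following any original edge toward $u$ decreases, and the mere inclusion $E \subseteq E \cup E_{KC}$ does not by itself give monotonicity of visit frequencies. However, the coupling you propose to neutralize this dilution does not close the gap. A transitive shortcut $v \xrightarrow{r} u$ is only added when the aggregated strength exceeds the threshold $\tau$, so not every multi-hop route to $u$ acquires a shortcut; and, more importantly, saying the coupled walk ``may traverse'' the shortcut is not a coupling. You would need to exhibit a joint law whose marginals are the two walk distributions and under which every visit to $u$ in $KG$ is matched by a visit in $KG'$; but the probability of actually taking the shortcut at $v$ is itself diluted by the very renormalization you are trying to control, and no quantitative comparison between the shortcut's one-step transition probability and the original multi-hop path's probability is given (nor is one available in general).

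In fact, no argument along these lines can succeed, because the statement cannot hold in the intended strength. By the paper's own definition, $\Pr_{\mathcal{W}}(u)$ is the fraction of all walk steps that land on $u$, so $\sum_{u \in V} \Pr_{\mathcal{W}}(u) = 1$ and likewise $\sum_{u \in V} \Pr_{\mathcal{W}'}(u) = 1$. If $\Pr_{\mathcal{W}'}(u) \geq \Pr_{\mathcal{W}}(u)$ held for every $u \in V$ with strict inequality for even one node, the two totals could not both equal one; the lemma can therefore only hold with equality everywhere, which contradicts the premise that KC meaningfully redistributes the walks. The paper's proof commits the same error, deducing the inequality directly from $E' \supseteq E$ without confronting the renormalization, so your attempt is more candid about where the problem lies --- but the conclusion is not salvageable unless the compared quantity is changed, for example to an unnormalized visit count, to visit probability restricted to nodes incident to $E_{KC}$, or to reachability within $T$ steps rather than visit frequency.
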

\begin{proof}[Proof of Lemma \ref{lemma:node2vec}]
The probability of node \(u\) to be traversed during the execution of \(\mathcal{W}'\) is:
\begin{equation}
    \Pr_{\mathcal{W}'}(u) = \frac{\sum_{i=1}^{k} \sum_{t=1}^{T} \mathbf{1}_{\{v'_{i,t} = u\}}}{k \cdot T}.
\end{equation}
Since KC can add new edges to the existing set, we have \( E' \supseteq E \), implying that the space of possible walks in \( KG' \) includes all those from \( KG \) and potentially more.
Hence, for any node \( u \in V \), it holds that:
\begin{equation}
\Pr_{\mathcal{W}'}(u) \geq \Pr_{\mathcal{W}}(u).
\end{equation}

\end{proof}

%This leads to several important consequences as nodes from previously isolated subgraphs may now co-occur in walk contexts.

\subsection{Effects of Transitive relationships on Node2Vec}
The integration of a transitive strength coefficient \( S(v, u, r) \in [0, 1] \), enables a weighted formulation of the transition probabilities in Node2Vec. In this formulation, the transition probability from node \( v \) to a neighbor \( u \) is redefined as:
\begin{equation}
P(u \mid v) = S(v, u, r) \cdot \pi_{vu}
\end{equation}
The strength coefficient acts as a multiplicative weight, modifying the walk probabilities according to the transitive semantics encoded in the graph.

\subsection{Impact of KC on GraphSAGE Embeddings}
In this subsection, we formally present the GraphSAGE algorithm and investigate the impact of KC, focusing on how the addition of inferred transitive relationships influences node aggregation.

\begin{definition}[GraphSAGE on Knowledge Graphs]
Given a Knowledge Graph \( KG = (V, R, E, \ell_V) \), GraphSAGE (Graph Sample and Aggregate) is an inductive framework that learns
%node 
embeddings
%\( h_v^{(k)} \in \mathbb{R}^d \)
by aggregating information from the local neighborhood of each node.
At each layer \( k \), the representation of a node \( v \in V \), denoted as \( h_v^{(k)} \in \mathbb{R}^d \), is updated as:
\begin{multline}
h_v^{(k)} = \sigma\Big( W^{(k)} \cdot 
\text{AGG}^{(k)}\big( \{ h_u^{(k-1)} \mid (u, r, v) \in E \} \\
\cup \{ h_v^{(k-1)} \} \big) \Big)
\end{multline}
where:
\begin{itemize}
    \item \( h_v^{(0)} \) is the initial feature vector of node \( v \);%, which may be derived from \( \ell_V(v) \);
    \item \( \text{AGG}^{(k)} \) is a permutation-invariant aggregator function;
    \item \( W^{(k)} \) is a trainable weight matrix at layer \( k \);
    \item \( \sigma \) is a non-linear activation function.
    %\item The aggregation is performed over the incoming neighbors of \( v \) based on the triplets \( (u, r, v) \in E \).
\end{itemize}
GraphSAGE supports inductive learning by enabling generalization to unseen nodes through learned aggregation functions over sampled neighborhoods\cite{b77}.
\end{definition}

\begin{definition}[Influence of KC in GraphSAGE]
Let \( KG = (V, E, R, \ell_V) \) be a Knowledge Graph, and let \( u \in V \) be a node. Let \( \mathcal{S} \) be the GraphSAGE sampling process used to collect neighborhoods up to depth \( K \) for each node \( v \). Denote \( N^{(k)}(v) \) the \( k \)-hop sampled neighborhood of node \( v \), and \( h_u^{(k)} \in \mathbb{R}^d \) the representation of node \( u \) at layer \( k \).  
We define the \emph{aggregation influence} of node \( u \) as:
\begin{equation}
\mathcal{I}_{\mathcal{S}}(u) = \frac{1}{|V| \cdot K} \sum_{v \in V} \sum_{k=1}^{K} \mathbf{1}_{\{u \in N^{(k)}(v)\}},
\end{equation}
where \( \mathbf{1}_{\{u \in N^{(k)}(v)\}} \) is 1 if node \( u \) is sampled in the \( k \)-hop neighborhood of \( v \), and 0 otherwise.
\end{definition}
\begin{lemma}\label{lemma:graphsage}
Let \( KG = (V, E, R, \ell_V) \) be a Knowledge Graph and let \( KG' = (V, E \cup E_{KC}, R, \ell_V) \) be the Knowledge Graph obtained by applying a Knowledge Completion (KC) step that infers transitive relationships and adds edges. Let \( \mathcal{S} \) be the GraphSAGE sampling strategy on \( KG \), and \( \mathcal{S}' \) the same strategy on \( KG' \). Then, for any node \( u \in V \), it holds that:
\begin{equation}
\mathcal{I}_{\mathcal{S}'}(u) \geq \mathcal{I}_{\mathcal{S}}(u).
\end{equation}
\end{lemma}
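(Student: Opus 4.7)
The plan is to mirror the argument used for Lemma~\ref{lemma:node2vec}, exploiting the fact that the KC step is purely edge-additive: $E' = E \cup E_{KC} \supseteq E$. Since $\mathcal{S}'$ applies the same sampling strategy as $\mathcal{S}$ but over a larger underlying edge set, the natural route is to show that $k$-hop sampled neighborhoods are monotone under edge addition and then propagate this monotonicity through the double sum defining $\mathcal{I}_{\mathcal{S}}(u)$.

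Concretely, I would proceed in three steps. First, by induction on $k$, I would show that for every $v \in V$ and every $1 \leq k \leq K$, the $k$-hop reachable set in $KG'$ contains the $k$-hop reachable set in $KG$: the base case $k=1$ follows immediately from $E' \supseteq E$, and the inductive step follows because any node reachable from $v$ in $k$ hops using edges of $E$ remains reachable via the same edges in $E'$. Second, I would deduce the pointwise inequality $\mathbf{1}_{\{u \in N^{(k)}(v)\}} \leq \mathbf{1}_{\{u \in {N'}^{(k)}(v)\}}$, interpreting it as a statement on expectations when the sampler is stochastic. Finally, I would sum this inequality over all $v \in V$ and $k \in \{1,\dots,K\}$ and divide by $|V|\cdot K$ to recover $\mathcal{I}_{\mathcal{S}'}(u) \geq \mathcal{I}_{\mathcal{S}}(u)$.

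The main obstacle will be the GraphSAGE sampling subtlety: the algorithm typically draws a \emph{fixed} number of neighbors per node, so enlarging the candidate neighborhood from $KG$ to $KG'$ could in principle dilute the probability of selecting a previously reachable node $u$ in a single draw. To address this I would either (i) interpret $N^{(k)}(v)$ as the full $k$-hop reachable set without subsampling, in which case the indicator inequality is deterministic, or (ii) work with expected influence under a neighbor sampler whose inclusion probabilities are monotone in the candidate set --- for instance, independent Bernoulli inclusions, or sampling with replacement in which $u$'s inclusion probability does not decrease when new transitive paths add further candidates. I would state the adopted convention explicitly, since this is the only point where the argument requires care beyond the plain edge-inclusion observation that drove the Node2Vec analogue.
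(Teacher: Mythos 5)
Your argument is essentially the paper's own proof: it likewise rests on the monotonicity $N^{(k)}(v) \subseteq N'^{(k)}(v)$ under the edge-additive completion $E' \supseteq E$, derives the pointwise indicator inequality, and sums and normalizes over $v$ and $k$. The sampling subtlety you flag in your final paragraph is real and is in fact glossed over in the paper, which simply asserts the neighborhood inclusion ``assuming the same sampling budget per layer''---precisely the regime in which a fixed-size neighbor sampler could fail to preserve inclusion---so your explicit choice of convention (full reachable sets, or a sampler with monotone inclusion probabilities) makes the argument more honest than the published one without changing its route.
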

\begin{comment}
GraphSAGE is an inductive framework for learning node embeddings in graphs. Unlike transductive methods, it generalizes to unseen nodes via an aggregation function over sampled neighborhoods. At each layer \( k \), the representation of node \( v \in V \) is updated as:
\begin{equation}
h_v^{(k)} = \sigma\left( W^{(k)} \cdot \text{AGG}^{(k)}\left( \{ h_u^{(k-1)} \mid u \in N(v) \cup \{v\} \} \right) \right)    
\end{equation}
where \( h_v^{(k)} \in \mathbb{R}^d \), \( W^{(k)} \) is a trainable matrix, \( \sigma \) a non-linearity function, \( N(v) \) the neighborhood of \( v \), and \( \text{AGG}^{(k)} \) a permutation-invariant aggregator.
\end{comment}

\begin{proof}[Proof of Lemma \ref{lemma:graphsage}]
Knowledge Completion (KC) adds edges to the graph, so the updated neighborhood function \( N'^{(k)}(v) \) in \( KG' \) satisfies:
\begin{equation}
N^{(k)}(v) \subseteq N'^{(k)}(v),
\end{equation}
for each node \( v \in V \) and each aggregation layer \( k \in \{1, \dots, K\} \), assuming the same sampling budget per layer. Therefore, any node \( u \) that appears in \( N^{(k)}(v) \) under \( \mathcal{S} \), will also appear under \( \mathcal{S}' \), with the possibility that additional appearances occur due to the expanded edge set.

This implies that:
\begin{equation}
\mathbf{1}_{\{u \in N^{(k)}(v)\}} \leq \mathbf{1}_{\{u \in N'^{(k)}(v)\}} \quad \forall v \in V, \forall k.
\end{equation}

Summing over all \( v \) and all \( k \), and normalizing:
\begin{equation}
\mathcal{I}_{\mathcal{S}'}(u)  \geq \mathcal{I}_{\mathcal{S}}(u).
\begin{comment}
    \mathcal{I}_{\mathcal{S}'}(u) = \frac{1}{|V| \cdot K} \sum_{v \in V} \sum_{k=1}^{K} \mathbf{1}_{\{u \in N'^{(k)}(v)\}} \geq \mathcal{I}_{\mathcal{S}}(u).
\end{comment}
\end{equation}

Hence, the influence of node \( u \) in the aggregation process increases (or remains the same) when transitive knowledge is added.
\end{proof}

\subsection{Effects of Transitive relationships on GraphSAGE}
The introduction of a transitive strength coefficient \( S(v, u, r) \in [0,1] \), enables a weighted formulation of the neighborhood aggregation process in the GraphSAGE framework. %The coefficient \( S(v, u, r) \) assigns a weight to each neighbor \( u \) of a node \( v \), with respect to a transitive relation \( r \), based on the number and length of paths between \( v \) and \( u \).
The standard GraphSAGE update rule is extended as follows:
\begin{multline}
h_v^{(k)} = \sigma\Big( W^{(k)} \cdot 
\text{AGG}^{(k)}\big( \{ S(v, u, r) \cdot h_u^{(k-1)} \mid (u, r, v) \in E \} \\
\cup \{ h_v^{(k-1)} \} \big) \Big)
\end{multline}
where:
\( S(v, u, r) \) is a weighting factor that modifies the contribution of each neighbor \( u \) in the computation of the embedding for node \( v \) at layer \( k \).
%In this formulation, the aggregation step includes a weighting factor \( S(v, u, r) \) that modifies the contribution of each neighbor \( u \) in the computation of the embedding for node \( v \) at layer \( k \).}
\section{Experimental Evaluation on Temporal Face-to-Face Contact Networks}
To assess the impact of our architecture on GEs and its influence propagation in  interaction networks, we designed a set of experiments using Node2Vec and GraphSAGE. Experiments were performed in Neo4j 5.24.0 GDB\footnote{neo4j.com}, using the Graph Data Science (GDS) library 2.12.0\footnote{neo4j.com/docs/graph-data-science} to compute measures.
%The workflow was developed in Python 3.11.9 with the GraphDatabase library. All tests were run on a machine with an Intel Core Ultra 5 and 16GB of DDR4 RAM. 
Results from our enriched system were then compared with those from the standard Neo4j-GML pipeline.

We used a real-world dataset which describes face-to-face interactions between individuals within an office building in France. The dataset, originally collected for epidemiological analysis, provides high-resolution temporal contact data between pairs of individuals\cite{b90}.
It is structured in two files, in particular the first one contains temporal edges of the form \texttt{t i j}, indicating that individuals $i$ and $j$ were in close-range physical proximity during a 20-second interval ending at time $t$, while the second file maps each individual ID $i$ to a department $D_i$.
The dataset was converted into a $KG = (V, E, R,\ell_V)$, where $V$ is the set of users and departments, $R = \{\texttt{HAS\_CONTACT\_WITH}, \texttt{IS\_PART\_OF}\}$, and $E \subseteq V \times R \times V$ the set of edges. Each $\texttt{HAS\_CONTACT\_WITH}$ edge has a parameter $t \in \mathbb{R}^+$ for total contact time and another one for the timestamp. Each node $v \in V$ is associated with: contagion probability $cp(v)$, degree centrality $\deg(v)$, total contact time $T(v)$ and average contact time $\bar{T}(v)$ (Fig. \ref{fig:kg_schema}).

\begin{figure}[ht!]
    \centering
    \includegraphics[width=0.85\linewidth]{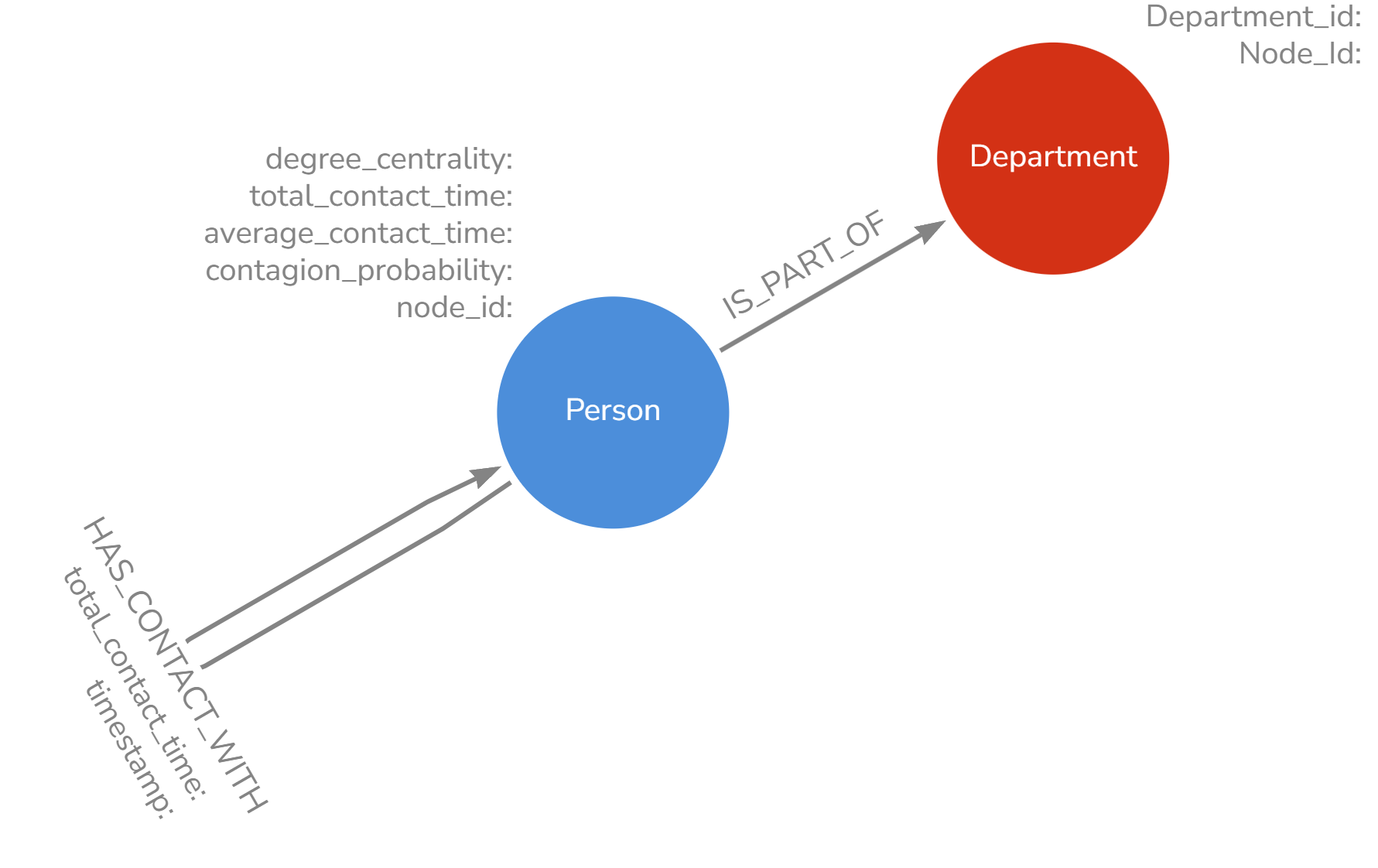}
    \caption{KG Schema.}
    \label{fig:kg_schema}
\end{figure}

\subsection{First KC Step: Handling Indirect Transitive Contacts}
Since the dataset is designed to describe relationships between pairs of individuals, it often occurs that a person $x$ is in contact with a person $y$, and at the same timestamp, $x$ is also in contact with a person $z$. However, the original dataset does not include a direct relationship between $y$ and $z$, even though it is evident that they are in the same room or present at the same location at the same time. This represents simple transitive relationships, which we introduce in order to enrich the semantic structure of the original dataset (Fig. \ref{fig:inferredl}).

\begin{figure}[ht!]
    \centering
    \includegraphics[width=0.75\linewidth]{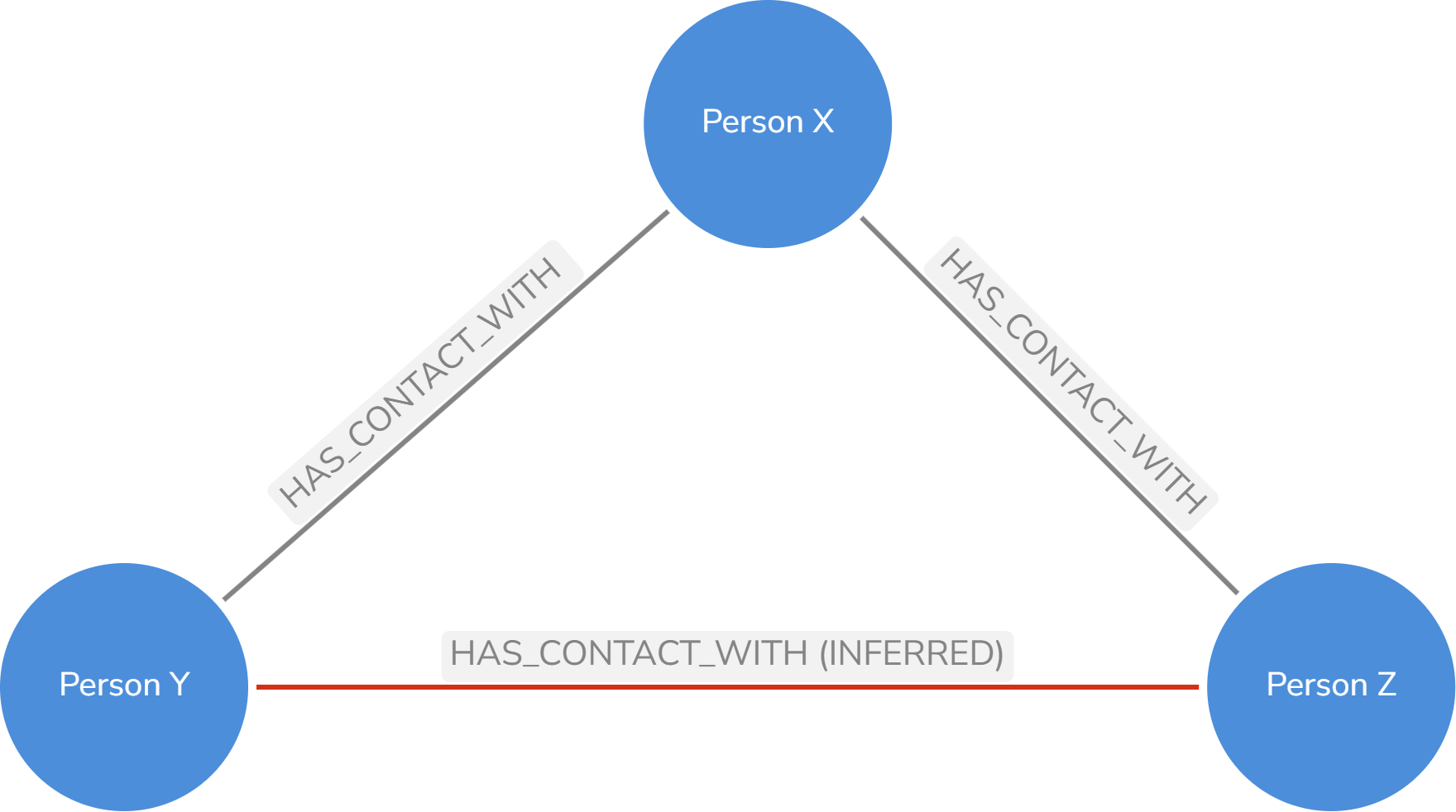}
    \caption{Inferred Contacts.}
    \label{fig:inferredl}
\end{figure}

To do that, we implemented a transitive closure algorithm at each timestamp using a modified BFS. By doing that, we identified connected components of individuals at each 20-second interval and generated all pairwise transitive links among individuals, even if not directly recorded. The process begins by grouping all interactions by timestamp $t$, then building a contact Knowledge Graph $KG_t$ where each edge $(i,j)$ represents a contact in the interval $[t{-}20, t]$. A BFS is applied to $KG_t$ to identify connected components. For each component, all possible pairs $(i,j)$ with $i \neq j$ are generated to perform transitive completion. At the end, for each moment in time, if person $i$ was connected to person $k$ and $k$ to person $j$, then a link between $i$ and $j$ is inferred.
Using the above method, we build two Knowledge Graphs instances:% 1) $KG_{raw}$, which is the original contact graph with direct edges only; 2) $KG_{KC}$, the enriched graph with transitive edges inferred using KC.
\begin{itemize}
    \item $KG_{raw}$, which is the original contact graph with direct edges only;
    \item $KG_{KC}$, the enriched graph with transitive edges inferred using KC.
\end{itemize}

The number of contacts between users changed from 1694 to 1882, which is an increase of 11.8\% (Fig \ref{fig:increase}).
\begin{figure}[ht!]
    \centering
    \includegraphics[width=0.83\linewidth]{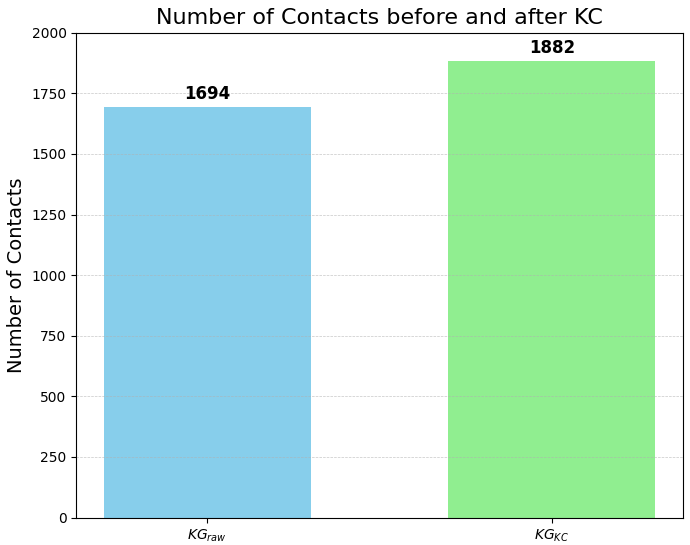}
    \caption{Contact growth between $KG_{\mathrm{raw}}$ and $KG_{\mathrm{KC}}$.}
    \label{fig:increase}
\end{figure}
\subsection{Second KC Step: Calculating the probability of COVID-19 spread via Transitive Scalable Relationships}
 We adopted a probabilistic model inspired by the dynamics of indoor COVID contagion\cite{b91} as strength function, to evaluate the effect of indirect contacts. The infection probability $P_B$ for a person $B$ exposed to an infected individual $A$ for $t$ seconds is defined by:
\begin{equation}
P_B = P_A  - e^{-\beta \cdot t},
\end{equation}
where $\beta$ is the transmissibility coefficient, set to 0.01 to reflect high-risk indoor contact scenarios, and \( t \) the contact time. %This model forms the basis of our scalable transitive relation: 
The overall contagion probability for a node $v$, indirectly connected to an infected node via multiple exposures, is aggregated as:
\begin{equation}
P_v^{\text{total}} = 1 - \prod_{i=1}^{k} \left(1 - \left(P_i - e^{-\beta \cdot t_i}\right)\right),
\end{equation}
where $P_v^{\text{total}}$ is the cumulative probability that at least one of the $k$ paths successfully transmits the infection and $P_i$ is the infection probability from each path with $\tau$ threshold set to 0.2. %Each malicious node propagated its infection risk to neighbors based on their contact durations, using the previously defined model. %Edge weights were defined as the cumulative contact duration between individuals. 
The decay function \( f(h) \) is modeled as \( e^{-\beta \cdot t} \). The path strength \( S_p(x, z, r) \) becomes \( P_i - e^{-\beta \cdot t_i} \), so Eq. \ref{eq:aggregated_score} can be written as:
\begin{equation}
    S(x, z, r) = 1 - \prod_{i=1}^{k} \left(1 - S_p(x, z, r)_i \right)
\end{equation}

\subsection{Effect of KC on most influent nodes}
\begin{definition}[PageRank on Knowledge Graphs]
PageRank is an algorithm that measures the importance of nodes by distributing scores through incoming links, preferring those connected to already influential ones. Formally, given a Knowledge Graph \( KG = (V, R, E, \ell_V) \), the PageRank score \( PR(v) \) of a node \( v \in V \) is defined as:
\[
PR(v) = \frac{1 - \alpha}{|V|} + \alpha \sum_{(u, r, v) \in E} \frac{PR(u)}{|\{ (u, r', w) \in E \}|}
\]
where:
\begin{itemize}
    \item \( \alpha \in (0,1) \) is the damping factor, set to \( 0.85 \) by default;
    \item \( (u, r, v) \in E \) indicates that node \( u \) is connected to node \( v \) via a relationship \( r \in R \);
    \item The denominator counts the number of outgoing edges from node \( u \), regardless of relation type\cite{b93}.
\end{itemize}
\end{definition}
We simulated the introduction of infected individuals to evaluate the effect of KC on influence dynamics. For both $KG_{raw}$ and $KG_{KC}$, we selected as initial seeds the top-5 nodes with highest Pagerank score. Starting from contacts, we then calculated the new infection probabilities.

\begin{table}[ht!]
\centering
\rowcolors{2}{gray!10}{white}
\begin{tabular}{|c|l|c|l|c|}
\hline
\rowcolor{gray!30}
\textbf{Rank} & \textbf{Without KC} & \textbf{PageRank} & \textbf{With KC} & \textbf{PageRank} \\
\hline
1  & Person 804 & 2.3536 & Person 804 & 2.0927 \\ % stesso nodo, stessa posizione → no colore
2  & Person 311 & 2.0100 & Person 311 & 2.0202 \\ % stesso nodo, stessa posizione → no colore
3  & \cellcolor{cyan!25}Person 95  & 1.7348 & \cellcolor{blue!20}Person 134 & 1.7626 \\
4  & \cellcolor{purple!25}Person 80  & 1.6639 & \cellcolor{teal!25}Person 223 & 1.7056 \\
5  & \cellcolor{blue!20}Person 134 & 1.6294 & \cellcolor{purple!25}Person 80  & 1.7021 \\
6  & \cellcolor{green!25}Person 63  & 1.5805 & \cellcolor{cyan!25}Person 95  & 1.5659 \\
7  & \cellcolor{teal!25}Person 223 & 1.5440 & \cellcolor{yellow!25}Person 875 & 1.4803 \\
8  & \cellcolor{yellow!25}Person 875 & 1.4645 & \cellcolor{green!25}Person 63  & 1.4303 \\
9  & Person 826 & 1.4387 & Person 662 & 1.4191 \\ % presenti solo una volta → no colore
10 & Person 123 & 1.4329 & Person 194 & 1.3944 \\ % presenti solo una volta → no colore
\hline
\end{tabular}
\vspace{0.5em}
\caption{Top 10 nodes PageRank values with and without KC.}
\label{tab:pagerank_kc_comparison}
\end{table}
The modification of the KC step radically alters the top PageRank nodes, indicating a complete change in the most important elements (Tab. \ref{tab:pagerank_kc_comparison}).

\subsection{Effects of KC on GEs generation}
We configured the GraphSAGE model to sample up to 25 neighbors in the first convolutional layer and 10 in the second, using a mean aggregator to integrate neighborhood information, with ReLU as activation function.
On the other hand, we configured Node2Vec with 10 random walks of length 80, a context window of 10, and default parameters ($p = q = 1$).
In both cases, we computed 16-dimensional embeddings. Then, we applied  Principal Component Analysis (PCA) to project them into two dimensions to visualize the results.

In the GraphSAGE model, the 2D PCA projection (Fig.~\ref{fig:GraphSAGE embeddings comparison}) shows a noticeable shift in the vector space. Embeddings from our improved pipeline (blue) have a broader spatial distribution and increased variance, indicating that the model captures additional semantic information from the inferred edges, which leads to more different embeddings. In contrast, the normal workflow (red) shows more compact clustering, suggesting a more limited representation of node interactions.

\begin{figure}[ht!]
    \centering
    \includegraphics[width=1\linewidth]{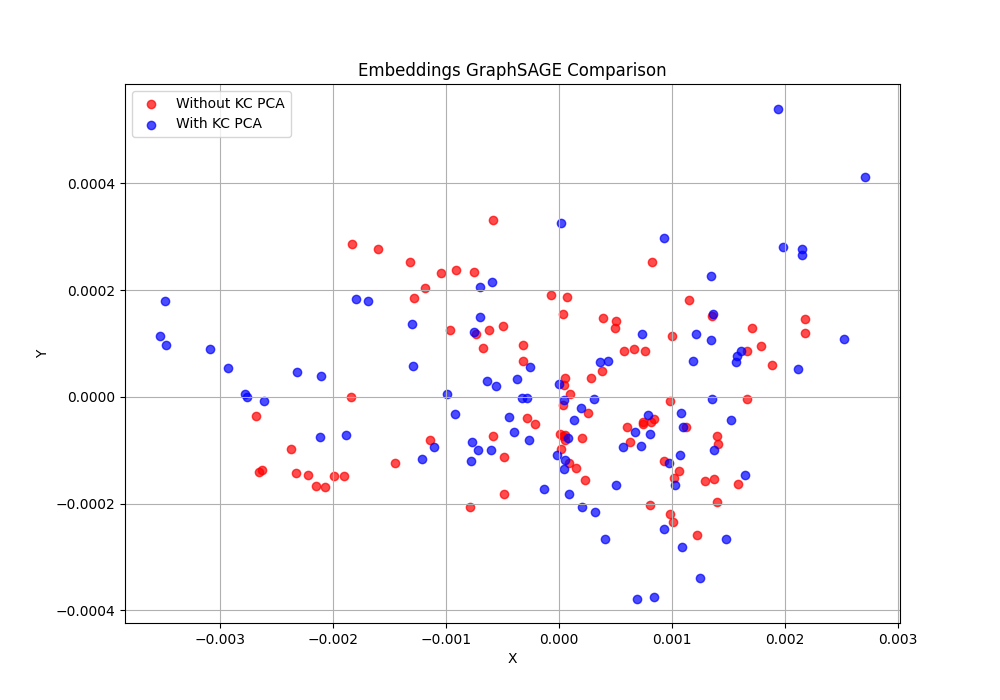}
    \caption{GraphSAGE embeddings comparison.}
    \label{fig:GraphSAGE embeddings comparison}
\end{figure}

Applying Node2Vec we obtained noticeable changes in the vector space (Fig.~\ref{fig:Node2Vec embeddings comparison}). In contrast to GraphSAGE, embeddings with KC (blue) show  less highly isolated nodes. These results suggest that the inferred edges help previously disconnected elements, enhancing the overall structure of the learned representations.
\begin{figure}[ht!]
    \centering
    \includegraphics[width=1\linewidth]{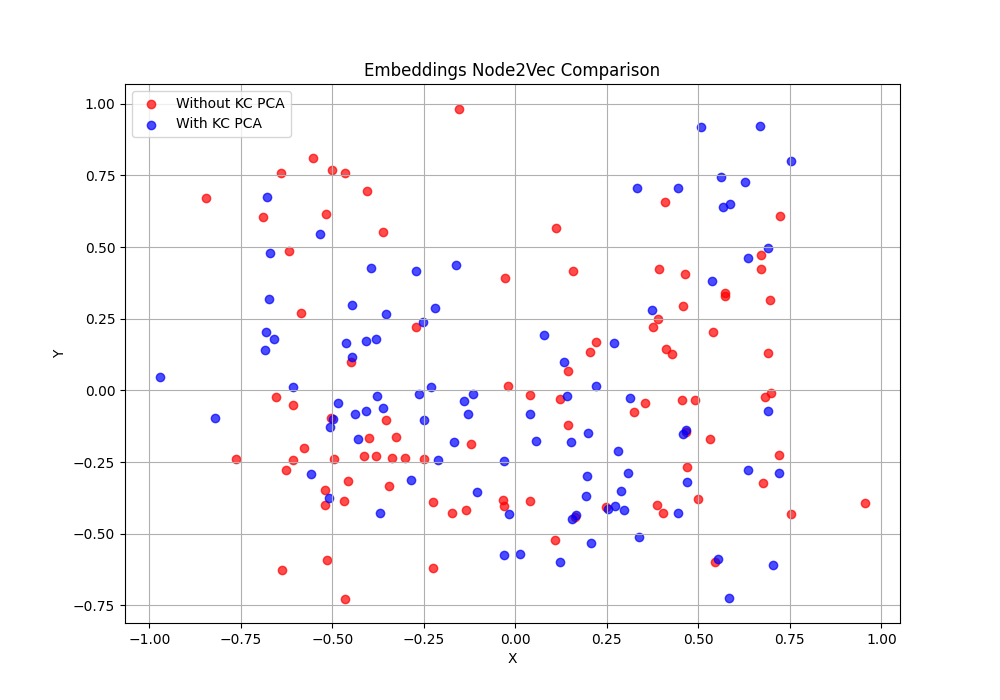}
    \caption{Node2Vec embeddings comparison.}
    \label{fig:Node2Vec embeddings comparison}
\end{figure}
%The experiments revealed that our pipeline alters the graph structure by connecting previously peripheral nodes to central components, significantly affecting the PageRank distribution and modifying contagion dynamics.

The structural modifications induced by KC on the \( KG \) resulted in measurable alterations in the learned node embeddings. To quantify this impact, we computed the Euclidean distance between the embeddings of each corresponding node in the original graph \( KG_{\text{raw}} \) and the completed graph \( KG_{\text{KC}} \). Formally, for each node \( v_i \in V \), we measured:
\[
d_i = \left\| \mathbf{v}_i^{\text{raw}} - \mathbf{v}_i^{\text{KC}} \right\|_2
\]
where \( \mathbf{v}_i^{\text{raw}} \) and \( \mathbf{v}_i^{\text{KC}} \) denote the embedding vectors of node \( v_i \) in \( KG_{\text{raw}} \) and \( KG_{\text{KC}} \), respectively.

The average Euclidean distance over all nodes was found to be \( 0.83 \) for embeddings produced by Node2Vec, and \( 0.041 \) for those generated by GraphSAGE. While these values offer a preliminary quantitative assessment of the embedding shifts caused by KC, further investigation is needed to analyze the nature and distributional characteristics of these changes across the embedding space.

Our pipeline improved GraphSAGE by enabling a more meaningful aggregation of node relationships, leading to a refined understanding of the graph data. For Node2Vec, the enhancements resulted in better random walk representations, improving path connections across the network, leading to more accurate and informative embeddings in both models.

%\vspace{-0.5em}

\section{Conclusions and Future Works}
In this work, we demonstrated how integrating a  KC phase before KR step can profoundly influence the behavior and results of GML pipelines. By enriching graph structures with inferred transitive relationships, we transformed the neighborhood topologies, modifying the main embedding generation techniques. The experimental evaluation showed that our new architecture reshaped node proximities, centrality dynamics  and enabled the generation of embeddings that better captured hidden semantic structures within the data. These results confirm that KC before KR is not just  a data augmentation phase, but a transformative process that redefines the embedding space, leading to more meaningful and accurate node representations.

As future work, we plan to study how KC-enriched embeddings impact downstream supervised tasks, with particular focus on classification and regression performances in various Graph Neural Network (GNN) architectures. We aim to assess whether the additional semantic structure introduced by the KC step translates into better generalization capabilities and higher accuracy. Furthermore, we intend to conduct an in-depth analysis on the scalability of the proposed KC methodology and explore performance optimizations to ensure its applicability to increasingly large and complex graph datasets.
\section*{Acknowledgment}
Rosario Napoli is a PhD student enrolled in the National PhD
in Artificial Intelligence, XL cycle, course on Health and
life sciences. This work has been partially funded by the the Italian Ministry of Health, Piano Operativo Salute (POS) trajectory 4 “Biotechnology, bioinformatics and pharmaceutical development”, through the Pharma-HUB Project "Hub for the repositioning of drugs in rare diseases of the nervous system in children" (CUP J43C22000500006), the Italian Ministry of Health, Piano Operativo Salute (POS) trajectory 2 “eHealth, diagnostica avanzata, medical device e mini invasività” through the project ``Rete eHealth: AI e strumenti ICT Innovativi orientati alla Diagnostica Digitale (RAIDD)''(CUP J43C22000380001), the “SEcurity and RIghts in the CyberSpace (SERICS)” partnership (PE00000014), under the MUR National Recovery and Resilience Plan funded by the European Union – NextGenerationEU. In particular, it has been supported within the SERICS partnership through the projects FF4ALL (CUP D43C22003050001) and SOP (CUP H73C22000890001), the Horizon Europe NEUROKIT2E project (Grant Agreement 101112268). 

\bibliographystyle{unsrt}
\bibliography{bib}

\end{document}